\newcommand{\Ex}{\mathop{\bf E\/}}
\newcommand{\corr}{\mathop{\bf Corr\/}}
\newtheorem{problem}{Problem}
\newcommand{\R}{\mathbb R}
\newtheorem{theorem}{Theorem}[section]
\renewcommand{\Pr}{{\bf Pr}}
\begin{document}
\title{Learned Ranking Function: From Short-term Behavior Predictions to Long-term User Satisfaction}

\author{Yi Wu$^*$}
\affiliation{%
  \institution{Google Inc}
  \city{Mountain View}
  \country{USA}}
\email{wuyish@google.com}
 
\author{Daryl Chang$^*$}
\affiliation{%
  \institution{Google Inc}
  \city{Mountain View}
  \country{USA}}
\email{dlchang@google.com}

\author{Jennifer She}
\affiliation{%
  \institution{Google DeepMind}
  \city{Mountain View}
  \country{USA}}
\email{jenshe@google.com}

\author{Zhe Zhao}
\affiliation{%
  \institution{University of California, Davis}
  \city{Davis}
  \country{USA}}
\email{zao@ucdavis.edu}

\author{Li Wei}
\affiliation{%
  \institution{Google Inc}
  \city{Mountain View}
  \state{California}
  \country{USA}}
\email{liwei@google.com}

\author{Lukasz Heldt}
\affiliation{%
  \institution{Google Inc}
  \city{Mountain View}
  \state{California}
  \country{USA}}
\email{heldt@google.com}

\renewcommand{\shortauthors}{Yi Wu et al.}

\begin{abstract}
We present the Learned Ranking Function (LRF), a system that
takes short-term user-item behavior predictions as input and outputs a slate of recommendations that directly optimizes for long-term user satisfaction. Most previous work is based on optimizing the hyperparameters of a heuristic function. We propose to model the problem directly as a slate optimization problem with the objective of maximizing long-term user satisfaction. We also develop a novel constraint optimization algorithm that stabilizes objective tradeoffs for multi-objective optimization. We evaluate our approach with live experiments and describe its deployment on YouTube.

\end{abstract}

\ccsdesc[500]{Information systems~Recommender systems}
\ccsdesc[500]{Computing methodologies~Reinforcement learning}

\keywords{Slate Optimization, Reinforcement Learning}

\maketitle
\def\thefootnote{*}\footnotetext{Equal contribution to the work.}\def\thefootnote{\arabic{footnote}}

\section{Introduction and Related Work} \label{sec:introduction}
\noindent Large video recommendation systems typically have the following stages:
\begin{enumerate}
    \item Candidate Generation: The system first generates a short list of video candidates from a large corpus \cite{seq,gravity}.
    \item Multitask Model Scoring: A Multitask model makes predictions about user behaviors (such as CTR, watch time after click) for all the candidates \cite{wnranking, deepyt}.  
    \item Ranking: Multitask predictions are combined into a single ranking score to sort all candidates \cite{rlmtf,rl_retention,pinterest,instagram,meta,meta_news_feed}.
    \item Re-ranking: Additional logic is applied to ranking score to ensure other objectives, e.g. diversity~\cite{dpp}, taking cross-item interaction into consideration.
\end{enumerate}
This paper primarily focuses on the ranking stage, i.e. combining user behavior predictions to optimize long-term user satisfaction.

Most existing deployed solutions (e.g. Meta~\cite{meta_news_feed,instagram,meta}, Pinterest~\cite{pinterest} and Kuaishou~\cite{rl_retention}) use a heuristic ranking function to combine multitask model predictions. As an example, given input user behavior predictions $s_1,s_2,\ldots, s_k$, the ranking formula can be $\sum_{i=1}^k w_i s_i$ with $w_i$ being the hyperparameters. Then these systems apply hyperparameter search methods (e.g. Bayesian optimization, policy gradient) to optimize the ranking function. Typically the complexity of optimization grows  with the number of hyperparameters, making it hard to change objectives, add input signals, or increase the expressiveness of the combination function.

We formulate the problem as a slate optimization instead. The goal of the optimization is learn a general ranking function to produce a slate that maximizes long-term user satisfaction. 

Let us take a look at related work in the area of slate optimization for long-term rewards. In~\cite{slateq}, the authors propose the SlateQ method, which applies reinforcement learning to solve slate optimization. One limitation of the work is it assumes a simple user interaction model without considering the impact of slate position on the click probability. In~\cite{markov}, the authors give an efficient algorithm for the combinatorial optimization problem of reward maximization under the cascade click model~\cite{cascade,dbn}, assuming the dynamics of the system are given as input. 

Existing slate optimization work typically assumes that future rewards are zero when a user abandons a slate, which is unrealistic. Platforms like video streaming services have multiple recommendation systems (e.g., watch page, home page, search page), where users might abandon one and return later through engagement with another. Hence, it is important to model and optimize the \textbf{lift} value of a slate, i.e. its incremental value over the baseline value of the user abandoning the slate. 

Another less studied but important issue when applying slate optimization at the ranking stage is the \textbf{stability }of multi-objective optimization. Most recommendation systems need to balance trade-offs among multiple objectives. Stability here refers to maintaining consistent trade-offs among these objectives when orthogonal changes, such as adding a feature or modifying the model architecture, are made to the algorithm. The stability is crucial for system reliability and developer velocity.

To address these existing limitations, we present the \textbf{Learned Ranking Function} (LRF) system. Our main contributions are three-fold:
\begin{enumerate}
    \item We model the user-slate interaction as a cascade click model~\cite{cascade} and propose an algorithm to optimize slate-wise long-term rewards. We explicitly model the value of abandonment and optimize for the long-term rewards for entire platform.
    \item We propose a novel constrained optimization algorithm based on dynamic linear scalarization to ensure the stability of trade-offs for multi-objective optimization.
    \item We show how the LRF is fully launched on YouTube and provide empirical evaluation results.
\end{enumerate}

The rest of paper is organized as follows. In Section~\ref{sec:problem}, we define the Markov Decision Process(MDP) for the problem of long-term rewards slate optimization. In Section~\ref{sec:opt}, we propose an optimization algorithm to solve the MDP problem. We show how we deploy the LRF to YouTube with evaluation results in Section~\ref{sec:dep_eval}.

\section{Problem Formation} \label{sec:problem}
\subsection{MDP Formulation}
We model the problem of ranking videos using the following MDP:
\begin{itemize}
    \item state space $\mathcal{S}=\mathcal{U}\times \{V |V\subset\mathcal{V}, |V|=n\}$. Here $\mathcal{U}$ is some user state space and $V$ is a set of $n$ candidate videos nominated for ranking from $\mathcal{V}$, the universe of all videos.
    \item action space $\mathcal{A}$ is all permutations of $n$. The system will rank $V=\{V^1,V^2,\ldots, V^n\}$ by the order of $V^{\sigma(1)}, \ldots, V^{\sigma(n)}$ with action $\sigma\in \mathcal{A}$.
    \item $\mathcal{P}:\mathcal{S} \times \mathcal{A}\times \mathcal{S}\to [0,1]$ is the state transition probability.
    \item reward function $r(s,\sigma)\in \R^m$ is the immediate reward vector by taking action $\sigma$ on state $s$. We consider the general case that there are $m$ different type of rewards.
    \item discounting factor $\gamma\in (0,1)$ and initial state distribution $\rho_0$.
\end{itemize}
A policy  $\pi$ is a mapping from user state $\mathcal{S}$ to a distribution on $\mathcal{A}$.  
Applying policy $\pi$ on $\rho_0$ gives a distribution on user trajectory $\mathcal{D}({\rho_0,\pi})$ defined as follows.
\begin{definition} \label{def:traj} We define $\mathcal{D}({\rho_0,\pi})$ as the distribution of user trajectories when applying policy $\pi$ on initial state distribution $\rho_0$. Here each user trajectory is a list of tuples $((s_0,\sigma_0,c_0),(s_1, \sigma_1,c_1), \ldots, )$. Here $s_i=(u_i,V_i)$ is the user state; $\sigma_i$ is a permutation action applied on $V$; $c_i$ is the user click position (with a value of $0$ indicating no click).
 We define cumulative reward for  $\tau$ starting from timestamp $t$ as $$Reward(\tau,t) = \sum_{t'\geq t}\gamma^{t'-t} \cdot r(s_t,\sigma_t).$$
and cumulative reward for policy $\pi$ as  $$J(\pi) = E_{\tau\sim \mathcal{D}(\rho_0,\pi)}[Reward(\tau, 0)].$$ 
\end{definition}
The optimization problem is to maximize cumulative reward for a primary objective subject to constraints on secondary objectives:
\begin{problem}\label{problem:opt}
$$\max_{\pi} J^{1}(\pi)$$ 
subject to $J^{k}(\pi)\geq \beta_k \text{ for } k=2,3,\ldots,m$.
Here $J^i(\pi)$ is the $i$-th element of $J(\pi)$.
\end{problem}

\subsection{Lift Formulation with Cascade Click model}
Let us follow the standard notation in reinforcement learning and define $Q^{\pi}(s,\sigma)$ as the expected cumulative reward taking action $\sigma$ at state $s$ and applying  policy $\pi$ afterwards; i.e., $$Q^{\pi}(s,\sigma) = r(s,\sigma) + \gamma \Ex_{\tau\sim \mathcal{D}( \mathcal{P}(s,\sigma,\cdot),\pi)}[Reward(\tau, 0)].$$

Below we will factorize $Q^{\pi}(s,\sigma)$ into user-item-functions; i.e., functions that only depend on user and individual item. 
 
Conditional on click position $c$, we can then rewrite $Q^{\pi}(s, \sigma)$ as
\begin{multline*}
    \Pr(c= 0) \cdot \Ex[Q^{\pi}(s,\sigma)|c = 0] +  \sum_{1\leq i\leq n} \Pr(c= i) \cdot \Ex[Q^{\pi}(s,\sigma)|c = i]
\end{multline*}

As mentioned in Section~\ref{sec:introduction}, the reward associated with the user abandoning the slate ($c=0$) can be nonzero.

Notice that  $\sum_{0\leq i\leq n} \Pr(c= i)=1$, so we can further rewrite $Q^{\pi}(s, \sigma)$ as
\begin{multline*}
\Ex[Q^{\pi}(s,\sigma)|c = 0]  \\ + \sum_{1\leq i\leq n} \Pr(c= i) \cdot \left(\Ex[Q^{\pi}(s,\sigma)|c = i] -\Ex[Q^{\pi}(s,\sigma)|c = 0]\right)
\end{multline*}

First, we simplify the term $\Ex[Q^{\pi}(s,\sigma)|c = i]$ for $0\leq i\leq n$ with user-item functions. In order to do so,  we make the "Reward/transition dependence on selection" assumption from~\cite{slateq} which states that future reward only depends on the item the user clicks.  In other words for $s=(u,V)$, 
\begin{enumerate}
\item when $i>0$,  $\Ex[Q^{\pi}(s,\sigma)|c = i]$ can be written  as $R_{clk}^{\pi} (u,V^{\sigma(i)})$ for $R_{clk}^{\pi}$ being a user-item function
\item $\Ex[Q^{\pi}(s,\sigma)|c = 0]$ can be written as $R_{abd}^{\pi} (u)$ for $R_{abd}^{\pi}$ being a user level function.
\end{enumerate}
We further define $R^{\pi}_{lift}$ as
$
R^{\pi}_{lift}(u,v) =  R_{clk}^{\pi}(u,v)-R^{\pi}_{abd}(u),
$ being the difference (i.e., lift) of future rewards associated with the user clicking item $v$ compared to user abandoning the slate. 

Next, we simplify the term $\Pr(c= i)$ with user-item functions by assuming the user interacts with the slate according to a cascade click model~\cite{cascade}. To model the behavior of the user abandoning a slate, we consider a variant~\cite{dbn,markov} which also allows the user to abandon the slate, in addition to skip and click,  when inspecting an item, as illustrated in Figure~\ref{fig:markov_reward_process}:
\begin{figure}[H]
    \centering
    \includegraphics[width=0.9\linewidth]{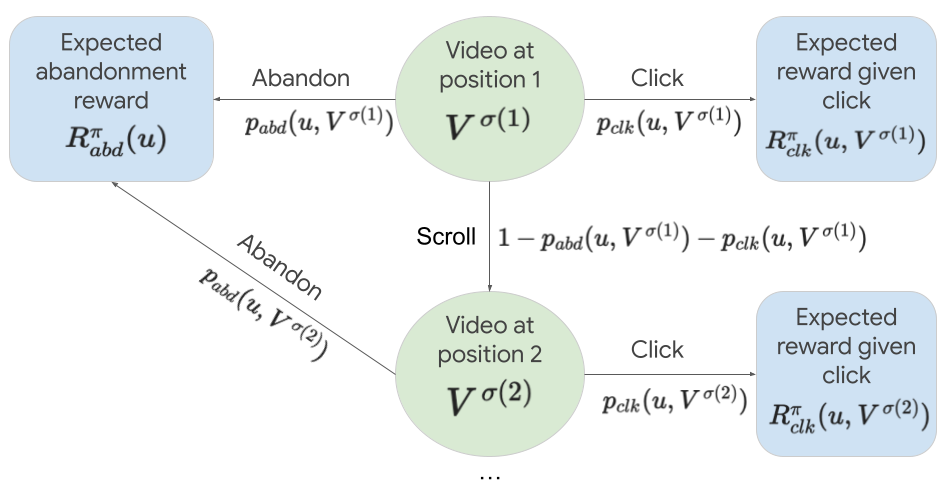}
    \caption{Markov Reward Process with Cascade Click Model}
    \label{fig:markov_reward_process}
\end{figure}
\begin{definition} (Cascade Click Model) \label{def:cascade}
Given user state $s = (u, V)$, where $u$ represents the user and $V$ represents the set of items, and a ranking order $\sigma$ on $V$, the Cascade model describes how a user interacts with a list of items sequentially. The user's interaction with the list when inspecting an item is characterized by the following user-item functions:

\begin{itemize}
\item $p_{clk}$: A user-item function where $p_{clk}(u, v)$ represents the probability of user $u$ clicking on item $v$ when inspecting it.
\item $p_{abd}$: A user-item function where $p_{abd}(u, v)$ represents the probability of user $u$ abandoning the slate when inspecting item $v$.
\end{itemize}

Taking $(u, V), \sigma, p_{clk}, p_{abd}$ as input, the Cascade model defines function $P_{cascade}^i$ that outputs the probability the user clicks on the item at the $i$-th position (for $1 \leq i \leq n$) with the form:

\begin{multline}\label{eqn
}
P_{cascade}^i(p_{clk}, p_{abd}, (u, V), \sigma) = \\
\left(\prod_{j=1}^{i-1} (1 - p_{clk}(u, V^{\sigma(j)}) - p_{abd}(u, V^{\sigma(j)}))\right) \cdot p_{clk}(u, V^{\sigma(i)}).
\end{multline}

The probability that the user abandons the slate without clicking on any items is defined by the function $P_{cascade}^0$ as:

\begin{equation}\label{eqn
}
P_{cascade}^0(p_{clk}, p_{abd}, (u, V), \sigma) = 1 - \sum_{i=1}^{n} P_{cascade}^i(p_{clk}, p_{abd}, (u, V), \sigma).
\end{equation}
\end{definition}

Putting everything together, we can rewrite $Q^\pi(s,\sigma)$ as 
\begin{equation}\label{eqn:slate} R_{abd}^{\pi} (u)  + \sum_{i=1}^{n} P_{cascade}^i(p_{clk},p_{abd},s,\sigma)\cdot R_{lift}^{\pi} (u, V^{\sigma(i)})
\end{equation}

We call equation~(\ref{eqn:slate}) the \textbf{lift formulation with cascade click model}. A natural question is how to order items to maximize $Q^{\pi}(s,\sigma)$ when there is only a single objective. Interestingly, despite the slate nature of this optimization, we prove that the problem can be solved by  a user-item ranking function. 
\begin{theorem} \label{thm:opt} Given user-item functions $p_{clk},p_{abd},R_{abd}^{\pi},R_{lift}^{\pi}$ as input, the optimal ranking for user $u$ on candidate $V$ maximizing $Q^\pi((u,V),\sigma)$  for a \textbf{scalar reward function} is to order all items $v\in V$ by  $\frac{p_{clk}(u,v)}{p_{clk}(u,v)+p_{abd}(u,v)} \cdot R_{lift}^{\pi}(u,v)$.
\end{theorem}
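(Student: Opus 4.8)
The plan is to use a classical \emph{exchange} (adjacent-transposition) argument. First I would discard the term $R_{abd}^{\pi}(u)$ from equation~(\ref{eqn:slate}), since it does not depend on the action $\sigma$, reducing the problem to maximizing $\sum_{i=1}^{n} P_{cascade}^i \cdot R_{lift}^{\pi}(u,V^{\sigma(i)})$ over all orderings. To lighten notation, for an item $v$ I would write $p_v = p_{clk}(u,v)$, $q_v = p_{abd}(u,v)$, $a_v = p_v + q_v$, and $R_v = R_{lift}^{\pi}(u,v)$, so that the quantity to maximize for an ordering $v_1,\dots,v_n$ becomes $f = \sum_{i=1}^{n}\bigl(\prod_{j<i}(1-a_{v_j})\bigr)\,p_{v_i}R_{v_i}$, and the claimed sorting key is $\tfrac{p_v}{a_v}R_v$.

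Next I would analyze the effect of swapping two \emph{adjacent} items. Suppose positions $k$ and $k+1$ hold items $x$ and $y$. The key observation --- and the heart of the proof --- is that such a swap only alters the two terms at positions $k$ and $k+1$: the prefix product $\Pi = \prod_{j<k}(1-a_{v_j})$ is unchanged because the items before position $k$ are fixed, and every term at a position $>k+1$ carries the factor $\Pi(1-a_x)(1-a_y)$, which is symmetric in $x$ and $y$ and hence also unchanged. Thus the entire difference in $f$ caused by the swap localizes to positions $k$ and $k+1$.

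Then I would compute this local difference directly. The contribution of the two positions is $\Pi\bigl[p_xR_x+(1-a_x)p_yR_y\bigr]$ with $x$ first, versus $\Pi\bigl[p_yR_y+(1-a_y)p_xR_x\bigr]$ with $y$ first; subtracting and factoring gives $\Pi\bigl[a_y\,p_xR_x - a_x\,p_yR_y\bigr] = \Pi\,a_x a_y\bigl(\tfrac{p_x}{a_x}R_x - \tfrac{p_y}{a_y}R_y\bigr)$. Since $\Pi\ge 0$ and $a_x,a_y\ge 0$, placing $x$ before $y$ yields an objective at least as large precisely when $\tfrac{p_x}{a_x}R_x \ge \tfrac{p_y}{a_y}R_y$. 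Hence any inversion relative to the decreasing-score order can be removed by an adjacent swap without decreasing $f$; running this to completion (as in bubble sort) shows the decreasing-score order is optimal, which is exactly the stated ranking rule.

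I expect the main obstacle to be rigorously establishing the locality claim of the second step --- that the shared prefix $\Pi$ together with the symmetric factor $(1-a_x)(1-a_y)$ causes all terms outside positions $k$ and $k+1$ to cancel --- since everything else is a short algebraic computation. A secondary point to handle is the degenerate case $a_v=0$ (an item with $p_v=q_v=0$), for which the sorting key is undefined; such an item contributes nothing to $f$ and leaves the continuation probability equal to $1$, so its placement is immaterial and can be fixed arbitrarily without affecting optimality.
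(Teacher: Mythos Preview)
Your proposal is correct. The paper's own proof performs the same first step---dropping the $\sigma$-independent term $R_{abd}^{\pi}(u)$---and then simply invokes Theorem~1 of~\cite{markov} for the remaining cascade-model maximization; your self-contained adjacent-exchange argument is the standard technique underlying that cited result, so the two approaches are essentially the same, with yours supplying the details the paper outsources.
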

\begin{proof} In equation~(\ref{eqn:slate}), $R_{abd}^{\pi} (u)$ only depends on users. Therefore, it  suffices to  optimize  $\sum_{i=1}^{n} P_{cascade}^i(p_{clk},p_{abd},V,\sigma)\cdot R_{lift}^{\pi} (u, V^{\sigma(i)})$. The rest of the proof follows  Theorem 1 in ~\cite{markov}.
\end{proof}

\section{Optimization Algorithm}\label{sec:opt}
This section outlines the optimization algorithm for solving Problem~\ref{problem:opt}, initially for the special case of a single objective; i.e., $m=1$ and subsequently extending to multi-objective constraint optimization.

\subsection{Single Objective Optimization}
Our algorithm employs an on-policy Monte Carlo approach~\cite{rlbook} which iteratively applies following  two steps:
\begin{enumerate}
\item Training: Build a function approximation $Q(s,\sigma;\theta)$ for $Q^{\pi}(s,\sigma)$ by separately building function approximations for $R^{\pi}_{abd}$, $R^{\pi}_{clk}$, $p_{clk}$ and $p_{abd}$, using data collected by applying some initial policy $\pi$.
\item Inference: Modify the policy $\pi$ to be  $arg\max_{\sigma} Q(s,\sigma;\theta)$ (with exploration).
\end{enumerate}

We outline the main steps in Algorithm~\ref{alg:opt} and discuss technical details in Section~\ref{sec:training} and ~\ref{sec:inference}.
\begin{algorithm}[t!]
\caption{Single objective optimization}
\begin{algorithmic}
\label{alg:opt}
\STATE initialize FIFO data buffer $D$
\STATE initialize network $R_{abd}(u,v;\theta)$, $R_{lift}(u,v;\theta)$, $p_{clk}(u,v;\theta)$, $p_{abd}(u, v;\theta)$ parameterized by $\theta$ and initial policy $\pi$
\WHILE{1}
    \STATE Apply $\pi$ to collect $K$ user trajectories and add it into $D$.
    \STATE Update $\theta$ by using data $D$ (see Section~\ref{sec:training} for details)
     \STATE Update $\pi$ such that for user $u$ with candidate set $V$
    \begin{enumerate}
        \item order $v\in V$ by $ \frac{p_{clk}(u,v;\theta )}{p_{clk}(u,v;\theta)+p_{abd}(u,v;\theta)} \cdot R_{lift}(u,v;\theta)$ 
        \item with probability $\epsilon$,  promote a random  candidate to top
    \end{enumerate}
\ENDWHILE
\end{algorithmic}
\end{algorithm}
\subsubsection{Training}\label{sec:training} The training data is collection of user trajectories (see Definition~\ref{def:traj}) stored in $D$.  Each user trajectory  can be written as $(((u_0,V_0),  \sigma_0,c_0),((u_1,V_1),\sigma_1,c_1), \ldots,.)$. We apply gradient updates for $\theta$ with the following loss functions, in sequential order. 




\paragraph{Training the abandon reward network} $R_{abd}(u;\theta)$ on abandoned pages with MSE  loss function
$$\Ex_{\tau \sim D, t\sim [|\tau|]} \left[|R_{abd}(u_t;\theta)  - Reward(\tau,t)|^2 |c_t=0 \right]$$

\paragraph{Training the  lift reward network} $R_{lift}(u,v;\theta)$ on clicked videos with MSE loss function 
\begin{multline*}\Ex_{\tau \sim D, t\sim [|\tau|]} [|R_{Lift}(u_t, V_t^{\sigma(c_t)};\theta) + R_{abd}(u_t;\theta) - \\  Reward(\tau,t)|^2|  c_t>0]
\end{multline*}
Here we apply the idea from uplift modeling~\cite{uplift} by directly estimating the difference between $R_{clk}^{\pi}(u,v)$ and $R^{\pi}_{abd}(u)$.

\paragraph{Training the click network} on every page with cross-entropy loss function
$$
 \Ex_{\tau \sim D, t\sim [|\tau|]}  [-\log(P_{cascade}^{c_t}(p_{clk}(\cdot,\cdot ;\theta),p_{abd}(\cdot,\cdot;\theta), (u_t, V_t), \sigma_t))],  
 $$using $P_{cascade}^{c_t}$  in Definition~\ref{def:cascade}.

\subsubsection{Inference} \label{sec:inference}
Here we simply apply Theorem~\ref{thm:opt} using the function approximation for $p_{clk},p_{abd},R_{abd}^{\pi},R_{lift}^{\pi}$. We randomly promote a candidate to top with small probability as exploration.

\subsection{Constraint optimization}\label{sec:offline_eval}
When there are multiple objectives, we apply linear scalarization~\cite{scale} to reduce the constraint optimization problem to a unconstrained optimization problem; i.e., we find weights $w_2,w_3,\ldots, w_m$ and define the new reward function as $r^{1} + \sum_{i=2}^m w_i\cdot r^{i}$.  With fixed weight combination, we found it often necessary to search new weights when making changes (e.g., add features, change model architecture) to the system, which slows down our iteration velocity.  We address the problem by dynamically updating $w$ as part of the training. At a high level, we make the following changes to Algorithm~\ref{alg:opt}:
\begin{enumerate}
    \item Training: apply Algorithm~\ref{alg:opt} for $R_{lift}(u,v;\theta)$ as a vector function for all the $m$ objectives separately.
    \item Inference: We find a set of weights $w = (1, w_2,w_3\ldots, w_m)$ and use the following ranking formula at serving time: $$\frac{p_{clk}(u,v;\theta )}{p_{clk}(u,v;\theta)+p_{abd}(u,v;\theta)} \cdot \langle R_{lift}(u,v;\theta),w \rangle.$$ The weights are dynamically updated with offline evaluation.
\end{enumerate}
The algorithm is outlined in Algorithm~\ref{alg:copt}, with details below. 
\subsubsection{Offline evaluation on exploration candidates}\label{sec:offline_eval}  
We apply our offline evaluation on a data set consisting of candidates that is randomly promoted as exploration during serving.
\begin{definition} We define $D_{eval}$ as
$$\{(u,v,r_v)| v \in V \text{ and is randomly promoted}, ((u,V), \sigma, c)\in \tau,\tau  \in D,\}$$
Here $r_v$ is the reward vector $r((u,V),\sigma)$ if $v$ is clicked and $0^m$ otherwise.
\end{definition}
We use $$\corr_{(u,v,r_v)\in D_{eval}}\left(r_v^{i}, \langle R_{lift}(u,v;\theta),w \rangle\right)$$ as the offline evaluation result for $i$-th objectives. Intuitively, we are computing the correlation between weight-combined lift on $v$ with the  immediate rewards from showing $v$. The correlation is computed on exploration candidates to make the evaluation result less biased by the serving policy.

\subsubsection{Optimization with correlation constraint}\label{sec:solve_corr}
With the offline evaluation defined above, we solve the following problem to update $w$ in Algorithm~\ref{alg:copt}.
\begin{problem}\label{problem:corr_opt}
\[
    \min_{w_2,w_3,\ldots,w_m} \Ex_{(u,v,r)\sim D_{eval}}[\sum_{i=2}^m (R_{lift}^{i}(u,v;\theta)\cdot w_i)^2]
\]
such that $\corr_{(u,v,r_v)\in D_{eval}}\left(r_v^{i}, \langle R_{lift}(u,v;\theta),w \rangle\right)\geq \alpha_i$  for $i=2,\ldots, m$ and $w=(1,w_2,\ldots, w_m)$
\end{problem}
Intuitively, we would like to minimize the change to the primary objective while satisfying offline evaluation on secondary objectives.

In the case of a single constraint (i.e., $m=2$),  there is a closed-form solution for the problem. To see this, the optimal solution for $w_2$ must be either $0$ or be a solution that makes the constraint tight; i.e., \begin{equation}\label{eqn:qudratic}
\corr \left(r_v^{2},  R_{lift}^{1}(u,v;\theta) + w_2 \cdot  R_{lift}^{2}(u,v;\theta)\right) =  \alpha_2.
\end{equation}
It is not hard to verify that the solution for  equation~(\ref{eqn:qudratic}) is also the solution for a quadratic equation of $w_2$ that can be solved with closed form. Therefore, we can set $w_2$ be the best feasible solution from $\{0\} \cup \{\text{solution for equation~(\ref{eqn:qudratic}})\}.$

When there is more than one constraint, we found that applying constraints sequentially works  well in practice. One can also apply grid search as the offline evaluation can be done efficiently.

\begin{algorithm}[t!]
\caption{Constraint Optimization}
\begin{algorithmic}
\label{alg:copt}
\STATE initialize FIFO data buffer $D$
\STATE initialize network $R_{abd}(u,v;\theta)$, $R_{lift}(u,v;\theta)$, $p_{clk}(u,v;\theta)$, $p_{abd}(u, v;\theta)$ parameterized by $\theta$ and initial policy $\pi$
\STATE initialize $m$-dimensional weight vectors $w=(1, 0,\ldots, 0)$.
\WHILE{1}
    \STATE apply $\pi$ and add $K$ user trajectory into $D$.
    \STATE update $\theta$ as Algorithm~\ref{alg:opt} 
    \STATE update $w$ (See Section~\ref{sec:offline_eval} and ~\ref{sec:solve_corr})
    \STATE Update $\pi$ such that for user $u$ with candidate set $V$
    \begin{enumerate}
        \item order all $v\in V$
 $\frac{p_{clk}(u,v;\theta)}{p_{clk}(u,v;\theta)+p_{abd}(u,v;\theta)}\cdot  \langle R_{lift}(u,v;\theta),w \rangle$
        \item with  probability $\epsilon$,  promote a random  candidate to top.
    \end{enumerate}
\ENDWHILE
\end{algorithmic}
\end{algorithm}

\section{Deployment and Evaluation}\label{sec:dep_eval}
\begin{figure}[H]
    \centering
    \includegraphics[width=1.0\linewidth]{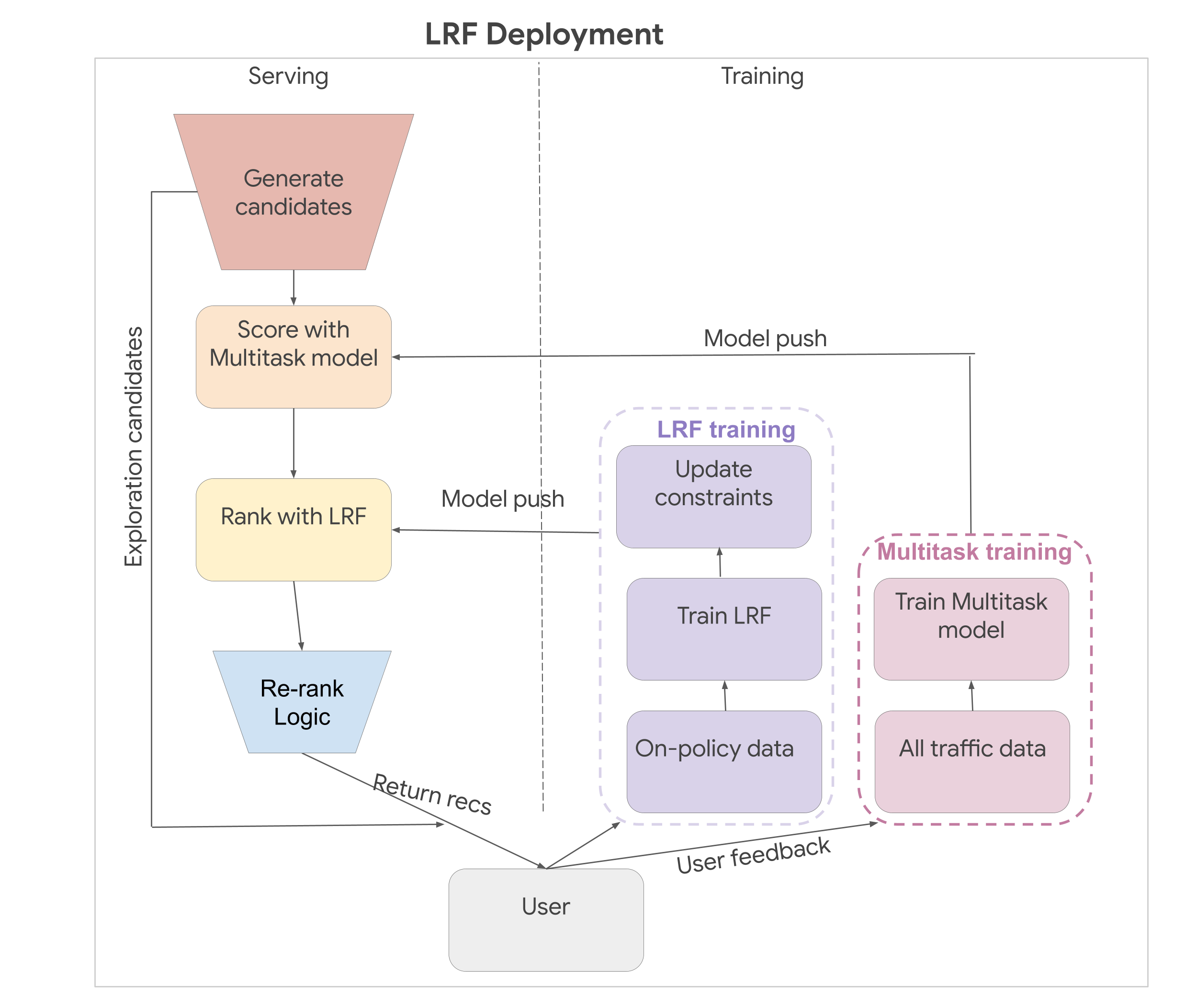}
    \caption{LRF deployment diagram}
    \label{fig:deployment}
\end{figure}

\subsection{Deployment of LRF System}\label{sec:dep}
The LRF was initially launched on YouTube's Watch Page, followed by the Home and Shorts pages. Below we discuss its deployment on Watch Page. Also see illustration in Figure ~\ref{fig:deployment}.

\paragraph{Lightweight model with on-policy training.} The LRF system applies an on-policy RL algorithm. In order to enable evaluating many different LRF models with on-policy training, we make all LRF model training on a small slice (e.g. 1\%) of overall traffic. By doing so,we can compare production and many experimental models that are all trained on-policy together. 

\paragraph{Training and Serving.} The LRF system is continuously trained  with user trajectories from the past few days. Our primary reward function is defined as the user satisfaction on watches, similar to the metric described in Section 6 of \cite{user_satisfaction}. The features include the user behavior predictions from multitask models, user context features (e.g. demographics) and video features (e.g. video topic). We use both continuous features and sparse features with small cardinality. The LRF model comprises small deep neural networks, with roughly $\Theta(10^4)$ parameters. The inference cost of the model is small due to the size of the model. We use the offline evaluation described in Section~\ref{sec:offline_eval} to ensure model quality before pushing to production. At serving time, the LRF takes the aforementioned features as input and outputs a ranking score for all items. 

\subsection{Evaluation}\label{sec:eval}
We conducted A/B experiments for $\Theta(week)$ on YouTube to evaluate the effectiveness of the LRF.  Metric trends are shown in Figure~\ref{fig:all_exps}. Note that first three experiments describe sequential improvements to the production system; the last two experiments ablate certain components of the LRF.

\paragraph{Evaluation Metric} Our primary objective is a metric measuring long-term cumulative user satisfaction; see Sec 6 of \cite{user_satisfaction} for details. 

\paragraph{Baseline before LRF Launch:} The previous system uses a heuristic ranking function optimized  by Bayesian optimization~\cite{google_vizier}. 

\paragraph{Hyperparameters:} We tuned two types of hyperparameters when deploying the LRF: training parameters, such as batch size, are tuned using offline loss; reward parameters, such as constraint weights, are tuned using live experiments.

\begin{figure}
\includegraphics[width=0.3\textwidth]{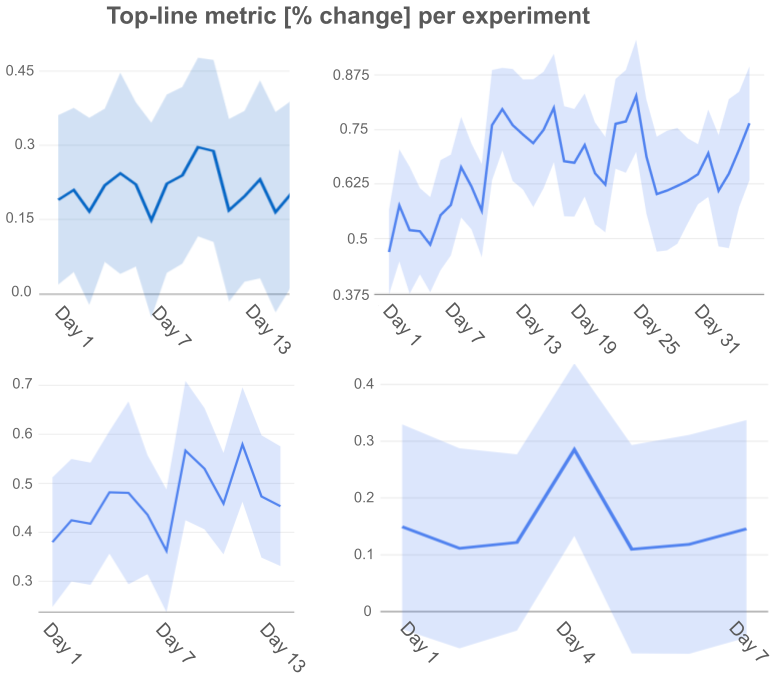}
\caption{Metrics for experiments in Section~\ref{sec:launch}(top left), \ref{sec:cascade} (top right), \ref{sec:uplift} (bottom left), and \ref{sec:two} (bottom right).}
\label{fig:all_exps}
\end{figure}

\subsubsection{Initial Deployment of LRF}
\label{sec:launch}
 \noindent We initially launched a simplified version of the LRF that uses the CTR prediction from the multitask model and ranks all candidates by $CTR\cdot R_{lift}$ . It also uses  a set of fixed weights to combine secondary objectives. The control was the previous production system, a heuristic ranking function tuned using Bayesian optimization. The LRF outperformed the production  by \textbf{0.21\%} with 95\% CI [0.04, 0.38] and was launched to production.

\subsubsection{Launch Cascade Click Model}
\label{sec:cascade}
\noindent After initial deployment of the LRF, we ran an experiment to determine the efficacy of the cascade click model, i.e., replacing $CTR$ with  $\frac{p_{clk}}{p_{clk}+p_{abd}}.$ Adding the cascade click model outperformed the control by \textbf{0.66\%}  with 95\% CI [0.60, 0.72] in the top-line metric and was launched to production. 
\subsubsection{Launch Constraint Optimization}
\noindent We found metric trade-offs between primary and secondary objectives unstable when combining the rewards using fixed weights.  To improve stability, we launched the constraint optimization. As an example of the improvement, for the same model architecture change, we saw a $13.15\%$ change in the secondary objective pre-launch, compared to a 1.46\% change post-launch. This post-launch fluctuation is considered small for that metric.
\subsubsection{Ablating Lift Formulation}
\noindent To determine the necessity of the lift formula, we ran an experiment that set $R_{abd}$ to be $0$.  Such a change regresses top-line metrics by \textbf{0.46\%} with 95\% CI [0.43, 0.49]. The metric contributed from watch page recommendations actually increases by $0.2\%$ with 95\% CI [0.14, 0.26]. This suggests the importance of  lift formulation as it is sub-optimal to only maximize rewards from watch page suggestions.
\subsubsection{Two Model Approach}\label{sec:uplift}
\label{sec:two}
\noindent We make separate predictions for $R_{abd}$ and $R_{clk}$. This is also known as the two-model baseline in uplift modelling~\cite{uplift}. The ranking formula is then $\frac{p_{clk}}{p_{clk}+p_{abd}}\cdot (R_{click}-R_{abd})$. The experiment results show that our production LRF outperforms this baseline in the top-line metric by \textbf{0.12\%} with 95\% CI [0.06, 0.18].

\section{Conclusion}
We presented the Learned Ranking Function (LRF), a system that combines short-term user-item behavior predictions to optimizing slates for long-term user satisfaction. One future direction is to apply more ideas from Reinforcement Learning such as off-policy training and TD Learning~\cite{rlbook}. Another future direction is to incorporate re-ranking algorithm (e.g.,~\cite{dpp,liu2022neural}) into the LRF system.

\newpage
\bibliographystyle{ACM-Reference-Format}
\balance 
\bibliography{sigproc}

\end{document}